\definecolor{linkcolour}{rgb}{0,0.2,0.6}
\newtheorem*{theorem*}{Theorem}
\newtheorem*{lemma*}{Lemma}
\newtheorem{case}{Case}
\def\CC{\textcolor{red}}
\def\cc{\textcolor{red}}
\def\cb{\textcolor{blue}}
\title{Rotation Equivariance and Invariance in Convolutional Neural Networks}
\author{
  Benjamin Chidester\\
  Computational Biology Department\\
  School of Computer Science\\
  Carnegie Mellon University\\
  Pittsburgh, PA 15213 \\
  \And
  Minh N.~Do\\
  Department of Electrical and Computer Engineering\\
  University of Illinois at Urbana-Champaign\\
  Urbana, IL 61801\\
  \And
  Jian Ma\\
  Computational Biology Department\\
  School of Computer Science\\
  Carnegie Mellon University\\
  Pittsburgh, PA 15213 \\
  \texttt{jianma@cs.cmu.edu}\\
}
\begin{document}
\maketitle

\begin{abstract}
\noindent 
Performance of neural networks can be significantly improved by encoding known invariance for particular tasks.
Many image classification tasks, 
such as those related to cellular imaging, 
exhibit invariance to rotation.
We present a novel scheme using the magnitude response of the 2D-discrete-Fourier transform (2D-DFT) to encode rotational invariance in neural networks, along with a new, efficient convolutional scheme for encoding rotational equivariance throughout convolutional layers.
We implemented this scheme for several image classification tasks and demonstrated improved performance, in terms of classification accuracy, time required to train the model, and robustness to hyperparameter selection, over a standard CNN and another state-of-the-art method.
\end{abstract}

\def\CC{\textcolor{red}}
\def\cc{\textcolor{red}}
\def\cb{\textcolor{blue}}

\section{Introduction}
\label{sec:intro}

Though the appeal of neural networks is their versatility for arbitrary classification tasks, there is still much benefit in designing them for particular problem settings.
In particular, their effectiveness can be greatly increased by encoding invariance to uniformative augmentations of the data~\citep{LeCun1989}.
The wide success of convolutional over fully-connected neural networks for image classification tasks is due to its innovation of encoding local translation invariance by convolution and pooling operations.
If such invariance is not explicitly encoded, the network must learn it from the data, requiring more parameters and thereby increasing its susceptibility to overfitting.

A key global invariance inherent to several computer vision settings, including satellite imagery and all forms of microscopy imagery, is {\it rotation}~\citep{Cheng2016,Boland2001}.
To aid neural networks in learning in such settings, standard practice is to augment the training data by rotations.
Recently, new formulations of convolutional layers have been proposed for neural networks, including a spatial transform layer~\citep{Jaderberg2015} and a deformable convolutional layer~\citep{Dai2017}, that allow the network to learn non-regular sampling patterns and can aid in learning rotation invariance, though invariance is not explicitly enforced.
\citet{Cheng2016} recently proposed a means of encouraging a network to learn global rotation invariance and showed improved performance on satellite imagery detection tasks, but the invariance is not expressly encoded.
Additionally, the convolutional layers of the network do not maintain the property of rotation equivariance with the input image, which requires that the network learn this equivariance and could therefore hinder performance.
In all such methods, it is not guaranteed that the network will truly learn such invariance and therefore generalize properly to unseen data, especially for tasks where imaging data is scarce.
For applications such as cellular microscopy image analysis, this is a prevalent challenge.

There are a variety of ways in which rotation invariance can be explicitly encoded into a neural network.
The earliest works proposed methods for fully-connected neural networks, but did not consider how to extend them to convolutional networks~\citep{Widrow1988,Khotanzad1990,Fukumi1997,Rowley1998}.
Recently, several methods have been proposed for CNNs.
\citet{Dieleman2016} proposed the use of rotated filters for convolutional layers, with subsequent pooling operations to encode invariance.
However, by imposing an invariant transform immediately after a convolutional layer, only local invariance can be captured and some global variational structure will be lost.
\citet{Cohen2016} and \citet{Worrall2017} both considered local and global rotation equivariance in convolutional networks.
The architecture proposed in~\citep{Cohen2016}, called a group-equivariant convolutional neural network (G-CNN), maintains the property of equivariance to any group, including rotation and flips, throughout the convolutional layers of the network and pools across all groups for each filter to encode invariance.
Improvement in classification accuracy of G-CNN over a standard CNN, as well as other state-of-the-art methods, was shown in several problem settings, including the classification of rotated images of MNIST.
By delaying the imposition of invariance till the final, fully-connected layers, as much global structure as possible is thereby incorporated.
However, by applying pooling on individual filters, potentially valuable mutual rotational information across filter responses is therefore lost.

In this paper, for an architecture that better captures global rotation invariance, we propose two innovations for CNNs:
(1) A novel, efficient formulation of convolution for maintaining rotation equivariance.
In this formulation, to achieve equivariance, rather than convolving each filter across the entire image, rotated filters are convolved along radial, conic regions of the input feature map.
(2) The formulation of a new transition layer between convolutional and fully-connected layers to encode invariance to rotation of the preceding convolutional layers, using the magnitude response of the 2D-discrete-Fourier transform (2D-DFT).
This transition layer transforms rotations of feature maps into circular shifts, to which the magnitude response of the 2D-DFT is invariant, in the transformed space.
Unlike the pooling of each filter response individually, our invariant transform is able to preserve valuable, mutual rotational information between different filter responses.
This insight was leveraged in earlier work for texture classification using wavelets~\citep{Do2002,Jafari-Khouzani2005,Ojala2002,Charalampidis2002}, but has not previously been integrated into the architecture of a CNN.
We refer to a network composed of both of these innovations as a \emph{rotation}-\emph{invariant} CNN (RiCNN).

To demonstrate the effectiveness of these contributions for rotation-invariant computer vision tasks, we implemented variations of network architectures both with and without each enhancement and trained them for several related problems in different applications: classifying rotated MNIST images, classifying our own synthetic images that model biomarker expression in microscopy images of cells, and localizing proteins in budding yeast cells~\citep{Kraus2017}.
Our analysis clearly shows that adding the magnitude response of the 2D-DFT to encode rotational invariance significantly improves the classification accuracy across these diverse data sets, while also reducing the time required to train the networks.
It also shows that our proposed rotation-equivariant convolution formulation improves classification accuracy generally over the standard raster convolution formulation and over the equivariant method of G-CNN in some settings.
Code for the implementation of RiCNN, along with code to recreate the results in the paper, is available at: \href{https://github.com/bchidest/RiCNN}{https://github.com/bchidest/RiCNN}.

\section{Rotation-Invariant CNN Formulation}

The overall architecture of RiCNN, as shown in Fig.~\ref{fig:ricnn_diagram}, consists of three stages: 
(1) rotation-equivariant convolutional layers with conic convolutional regions; 
(2) a rotation-invariant transition layer from convolutional to fully-connected layers using the 2D-DFT; 
and (3) fully-connected layers and output layer.

\subsection{Rotation-Equivariant Convolutional Layers}


Our proposed formulation for a rotation-equivariant convolutional layer associates rotations in the input image with rotations in the feature map that is generated for each filter.
This is accomplished by convolving the input with each filter, rotated by multiples of $\frac{\pi}{2R}$, for $R\in\mathbb{Z}_{>0}$, over corresponding conic regions of the domain.
We consider feature maps as functions over 2D space and denote the input feature map from a previous layer, or the input of the network, by $a\colon\mathbb{Z}^{2} \to \mathbb{R}^{d}$, where $d$ is the depth of the feature map.
The domain is partitioned into conic regions $\left\{\mathcal{C}_{r}\right\}_{r=0}^{4R-1}$ emanating from the center and the borders between conic regions $\left\{\mathcal{B}_{r}\right\}_{r=0}^{4R-1}$.
Although equivariance in this formulation will only be guaranteed for rotations of multiples of $\frac{\pi}{2}$ (i.e. $R=1$), rotations of smaller degree can be approximated while still maintaining equivariance for rotations of multiples of $\frac{\pi}{2}$.
The conic regions and boundaries are defined by:
\begin{align}
    \mathcal{C}_{r} &= \left\{(x,y)\in \mathbb{Z}^{2}\colon \theta_{r} < \operatorname{arccot}\left(x/y\right) + \pi\mathbb{I}(y < 0) < \theta_{r + 1}\right\},\\
    \mathcal{B}_{r} &= \left\{(x,y)\in \mathbb{Z}^{2}\colon \operatorname{arccot}\left(x/y\right) + \pi\mathbb{I}(y < 0) = \theta_{r} \right\},
\end{align}
where $\theta_{r} = \frac{2\pi r}{4R}$ and $\mathbb{I}(\cdot)$ is the indicator function.
Note that the origin does not have a defined output for $\operatorname{arccot}(\cdot)$ and therefore does not belong to any region or boundary, which will be addressed subsequently.
The input feature map $a$ is convolved over each region of the domain with each filter $\omega_{k}\colon\mathbb{Z}^{2}\to \mathbb{R}^{d}$, of $k\in\left\{0,1,\ldots,K-1\right\}$, but this filter must be rotated by the appropriate angle $\theta_{r}$ for each region or boundary.
The operation $\Theta\colon f(\mathbb{Z}^{2})\times \left[0, 2\pi\right] \to f(\mathbb{Z}^{2})$, where $f(\mathbb{Z}^{2})$ is the set of functions on $\mathbb{Z}^{2}$, rotates the filter $\omega_{k}$ counter-clockwise by an angle $\theta\in\left[0, 2\pi\right]$ according to some interpolation scheme, such as nearest values or bilinear.
In this work, we consider only filters for which the depth $d$ is equal to the depth of the input feature layer.
We denote the convolution of the feature map with a particular filter and rotation by $\phi_{r}\colon f(\mathbb{Z}^{2})\times f(\mathbb{Z}^{2})\to f(\mathbb{Z}^{2})$, which is defined as:
\begin{equation}
    \phi_{r}(a, \omega_{k})(x,y) \triangleq (a \ast \Theta(\omega_{k}, \theta_{r}))(x,y) = \sum_{(x^{\prime}, y^{\prime})\in\mathbb{Z}^{2}}\Theta(\omega_{k}, \theta_{r})(x^{\prime}, y^{\prime})a(x - x^{\prime}, y - y^{\prime}).
\end{equation}

Consideration must be given to which rotations will be applied along boundaries and at the origin to maintain the desired property of equivariance.
We chose to handle boundaries by rotating the filter by the respective angles of the conic regions that are separated by the boundary and then pooling over the results of the convolution for both rotations.
We used max pooling, though other pooling operators, such as the average, could also be used.
The result of convolution with the $k$-th filter is denoted by $\phi\colon f(\mathbb{Z}^{2})\times f(\mathbb{Z}^{2})\to f(\mathbb{Z}^{2})$ and given by:
\begin{equation}
\begin{split}
    \phi(a, \omega_{k})(x,y) \triangleq 
    \begin{cases}
        \phi_{r}(a,\omega_{k})(x,y) & (x,y)\in\mathcal{C}_{r},\\
        &r\in\left\{0,1,\ldots,4R-1\right\},\\
        \max \left\{\phi_{r}(a,\omega_{k})(x,y), \phi_{r+1}(a,\omega_{k})(x,y)\right\}& (x,y)\in\mathcal{B}_{r},\\
        &r\in\left\{0,1,\ldots,4R-1\right\},\\
        \max_{r\in\left\{0,1,\dots, R-1\right\}} \phi_{r}(a,\omega_{k})(x,y)& x=0,y=0.
    \end{cases}
\end{split}
\label{eqn:cases}
\end{equation}

\begin{figure*}[t!]
  \centering
  \centering
  \includegraphics[width=.97\textwidth]{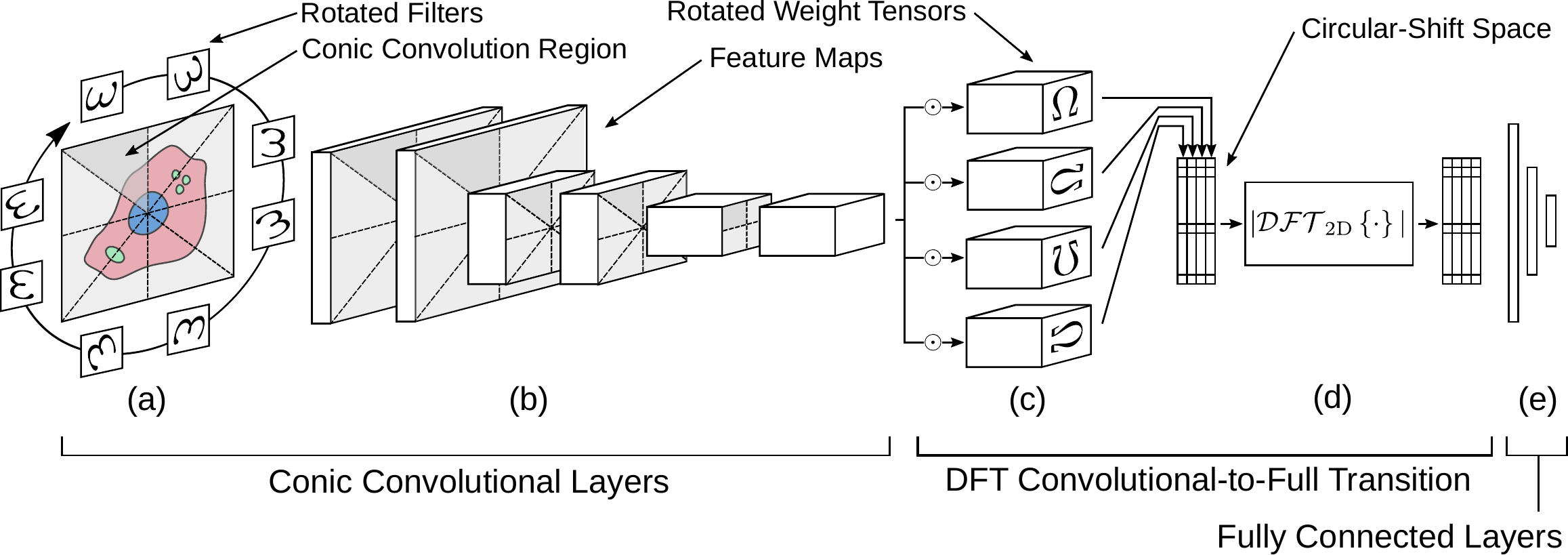}
  \caption{The overall architecture of the proposed rotation-invariant CNN.
      (a) Filtering the image by various filters $\left\{\omega\right\}$ at rotations in corresponding conic regions preserves rotation-equivariance.
      (b) Subsequent convolutional feature maps are filtered similarly.
      Rotation-invariance is encoded by the transition from convolutional to fully-connected layers, which consists of (c) element-wise multiplication and sum, denoted by $\odot$, with rotated weight tensors $\left\{\Omega\right\}$, transforming rotation to circular shift, and (d) application of the magnitude response of the 2D-DFT to encode invariance to such shifts.
      (e) This output is reshaped and passed through the final, fully-connected layers.}
  \label{fig:ricnn_diagram}
\end{figure*}

It is typical in convolutional networks to perform subsampling, either by striding the convolution or pooling local regions, to reduce the dimensionality of subsequent layers.
Given that the indices of the plane of the feature map are in $\mathbb{Z}^{2}$ and are therefore centered about the origin, a downsampling of $D\in \mathbb{Z}_{> 0}$ can be applied while maintaining rotational equivariance.
After subsampling, the result is passed through a non-linear activation function $\sigma\colon\mathbb{R}\to\mathbb{R}$, such as ReLU, with an added offset $c_{k}\in\mathbb{R}$.
The composition of these operations is denoted by $\Phi\colon f(\mathbb{Z}^{2})\to f^{K}(\mathbb{Z}^{2})$ and given by:
\begin{equation}
    \Phi(a)(x,y) = \left\{\sigma\left(\phi\left(a,\omega_{k}\right)(Dx, Dy) + c_{k}\right)\right\}_{k=0}^{K-1}.
\end{equation}
Example convolutional regions with appropriate filter rotations are shown in Fig.~\ref{fig:ricnn_diagram}.
We note that $R$ need not be the same for each layer to maintain rotation equivariance to rotations of $\frac{\pi}{2}$ throughout the network, and it may be advantageous to use a finer discretization of rotations for early layers, when the feature maps are larger, and gradually decrease $R$.

An advantage of this formulation is its efficient use of storage and computation.
In theory, the only additional computation and storage required by the proposed formulation, beyond that of standard convolution, is for the boundaries and origin, though in actual implementation, given the hardware efficiency of GPUs for raster convolution, some additional computation and storage was used.
In contrast, the formulation of G-CNN consists of convolving the entire input feature map with rotations of each filter of multiples of $\frac{\pi}{2}$,
yielding a separate 2D feature map for each rotation of each filter.
A potential disadvantage of this formulation is its possible instability to translations, or jitter, of the input image, which we assess in the experiments presented in Section 3.

We now prove the rotation equivariance property of the proposed convolutional formulation for the case of rotations of $\frac{\pi}{2}$.
First, we note that for such rotations, regardless of the interpolation function $\Theta$, no interpolation is needed, and the operation is merely a change of indices.
Consider $\theta=\frac{n\pi}{2}$, for some $n\in\mathbb{Z}$, and a given feature map $a$.
We define the function $\mu_{n}(x,y)\colon \mathbb{Z}^{2} \to \mathbb{Z}^{2}$ as:
\begin{equation}
   \mu_{n}(x,y) = 
   \begin{cases}
       (x,y) & n\mod 4 = 0,\\
       (-y,x) & n\mod 4 = 1,\\
       (-x,-y) & n\mod 4 = 2,\\
       (y,-x) & n\mod 4 = 3.
   \end{cases}
\end{equation}
Therefore,
\begin{equation}
   \Theta(a, \theta)(x,y) = a(\mu_{n}(x,y)), \hspace{0.3cm}\text{for } \theta=\frac{n\pi}{2} .
\end{equation}
Given this relationship, we establish the following lemma, that the operation $\Theta$ distributes over multiplication.
\begin{lemma*}
   For given functions $a,b\in f(\mathbb{Z}^{2})$ and $\theta\in\left[0, 2\pi\right]$, if $\theta = \frac{n\pi}{2}$ for some $n\in \mathbb{Z}$, then the operation $\Theta\colon f(\mathbb{Z}^{2})\times\left[0,2\pi\right]\to f(\mathbb{Z}^{2})$ can be distributed over multiplication, i.e.,
   \begin{equation}
       \Theta(ab,\theta) = \Theta(a, \theta)\Theta(b,\theta)
   \end{equation}
\end{lemma*}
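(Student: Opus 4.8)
The plan is to leverage the reindexing identity $\Theta(a,\theta)(x,y) = a(\mu_{n}(x,y))$, valid whenever $\theta = \frac{n\pi}{2}$, which is already established in the excerpt. Once that is in hand, the lemma reduces to the elementary fact that pointwise multiplication of functions commutes with precomposition by a fixed map.

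First I would fix $\theta = \frac{n\pi}{2}$ and recall that the product $ab$ appearing in the statement is the pointwise product, i.e.\ $(ab)(x,y) = a(x,y)\,b(x,y)$ for every $(x,y)\in\mathbb{Z}^{2}$. Next, for an arbitrary point $(x,y)\in\mathbb{Z}^{2}$, I would evaluate the left-hand side using the reindexing identity:
\[
   \Theta(ab,\theta)(x,y) = (ab)(\mu_{n}(x,y)) = a(\mu_{n}(x,y))\,b(\mu_{n}(x,y)).
\]
Then I would rewrite each factor by applying the same identity to $a$ and to $b$ separately, obtaining $a(\mu_{n}(x,y)) = \Theta(a,\theta)(x,y)$ and $b(\mu_{n}(x,y)) = \Theta(b,\theta)(x,y)$, so that the right-hand expression equals $\Theta(a,\theta)(x,y)\,\Theta(b,\theta)(x,y) = \bigl(\Theta(a,\theta)\Theta(b,\theta)\bigr)(x,y)$. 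Since $(x,y)$ was arbitrary, the two functions agree everywhere, which is the claimed identity.

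The only point that needs care — and it is where the hypothesis $\theta = \frac{n\pi}{2}$ is essential — is that $\mu_{n}$ is a single, well-defined map on $\mathbb{Z}^{2}$ that does not depend on which function it is applied to; this is exactly what legitimizes the step $(ab)\circ\mu_{n} = (a\circ\mu_{n})(b\circ\mu_{n})$, and it is precisely what fails for a general angle, where $\Theta$ involves an interpolation scheme blending several nearby samples rather than a pure relabelling of indices. I do not anticipate any genuine obstacle beyond stating these observations cleanly: the argument is a one-line computation once the reindexing identity is quoted.
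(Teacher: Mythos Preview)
Your proposal is correct and matches the paper's own proof essentially line for line: both arguments simply evaluate at an arbitrary $(x,y)$, invoke the reindexing identity $\Theta(\cdot,\theta)(x,y)=(\cdot)(\mu_{n}(x,y))$, and use that pointwise multiplication commutes with precomposition by $\mu_{n}$. The only cosmetic difference is that the paper starts from $\Theta(a,\theta)(x,y)\Theta(b,\theta)(x,y)$ and works toward $\Theta(ab,\theta)(x,y)$, whereas you go in the opposite direction.
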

\begin{proof}
   For $(x,y)\in\mathbb{Z}^{2}$,
   \begin{align*}
       \Theta(a, \theta)(x,y)\Theta(b,\theta)(x,y)
       &= a(\mu_{n}(x,y))b(\mu_{n}(x,y))\\
       &= ab (\mu_{n}(x,y))\\
       &= \Theta(ab, \theta)(x,y)
   \end{align*}
\end{proof}
With this lemma, we prove the rotational equivariance of the proposed CNN architecture.
Note that the theorem is established independent of the number of regions $R$.

\begin{theorem*}[Rotational Equivariance]
    If the input to the operation $\phi \colon f(\mathbb{Z}^{2})\times f(\mathbb{Z}^{2}) \to f(\mathbb{Z}^{2})$ is transformed by a rotation of an angle $\theta=\frac{n\pi}{2}$, for some $n\in\mathbb{Z}$, then the output of the operation will rotate equivalently.
    In other words, for given functions $a, b \colon \mathbb{Z}^{2}\to \mathbb{R}^{K}$, $\forall (x,y) \in\mathbb{Z}^{2}$, 
    \begin{equation}
        \phi(\Theta(a, \theta), b)(x,y) = \Theta(\phi(a, b), \theta)(x,y).  \label{eqn:re_thm}
    \end{equation}  \label{thm:re}
\end{theorem*}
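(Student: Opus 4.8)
The plan is to combine the piecewise formula~\eqref{eqn:cases} for $\phi$ with two ingredients. The first is the Lemma, which lets the operator $\Theta(\,\cdot\,,\theta)$ be factored out of the pointwise products that make up a convolution sum, once both factors have been put into the form $\Theta(\,\cdot\,,\theta)$. The second is the group law $\Theta\bigl(\Theta(f,\alpha),\beta\bigr)=\Theta(f,\alpha+\beta)$, which for $\beta=\theta$ a multiple of $\tfrac{\pi}{2}$ I would justify by noting that such a rotation is an exact, interpolation-free relabelling of $\mathbb{Z}^{2}$ and therefore commutes with the (possibly interpolated) rotation $\Theta(\,\cdot\,,\alpha)$. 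I would also record the elementary geometric fact that $\mu_{n}$ is a linear bijection of $\mathbb{Z}^{2}$ that permutes $\{\mathcal{C}_{r}\}$ and $\{\mathcal{B}_{r}\}$ among themselves and fixes the origin, sending $\mathcal{C}_{r}$ to the cone $\mathcal{C}_{r'}$ with $\theta_{r'}\equiv\theta_{r}-\theta\pmod{2\pi}$ (and boundaries analogously). Since both sides of~\eqref{eqn:re_thm} are evaluated pointwise, it then suffices to treat the three cases appearing in~\eqref{eqn:cases}: $(x,y)$ interior to a cone, $(x,y)$ on a boundary, and $(x,y)$ the origin.

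\textbf{Cone interiors.} Fix $(x,y)$ interior to $\mathcal{C}_{r}$. The left side of~\eqref{eqn:re_thm} is $\bigl(\Theta(a,\theta)\ast\Theta(b,\theta_{r})\bigr)(x,y)=\sum_{(x',y')}\Theta(b,\theta_{r})(x',y')\,\Theta(a,\theta)\bigl((x,y)-(x',y')\bigr)$. I would first use the group law to write $\Theta(b,\theta_{r})=\Theta\bigl(\Theta(b,\theta_{r}-\theta),\theta\bigr)$, and, using linearity of $\mu_{n}$, rewrite $\Theta(a,\theta)\bigl((x,y)-(x',y')\bigr)=a\bigl(\mu_{n}(x,y)-\mu_{n}(x',y')\bigr)=\Theta(g,\theta)(x',y')$, where $g(x',y'):=a\bigl(\mu_{n}(x,y)-(x',y')\bigr)$. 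Each summand is then $\Theta\bigl(\Theta(b,\theta_{r}-\theta),\theta\bigr)(x',y')\cdot\Theta(g,\theta)(x',y')$, which the Lemma collapses to $\Theta\bigl(\Theta(b,\theta_{r}-\theta)\cdot g,\,\theta\bigr)(x',y')$. Reindexing the sum by $(x'',y'')=\mu_{n}(x',y')$ absorbs the outer $\Theta$ and leaves $\sum_{(x'',y'')}\Theta(b,\theta_{r}-\theta)(x'',y'')\,a\bigl(\mu_{n}(x,y)-(x'',y'')\bigr)=\bigl(a\ast\Theta(b,\theta_{r}-\theta)\bigr)\bigl(\mu_{n}(x,y)\bigr)=\phi_{r'}(a,b)\bigl(\mu_{n}(x,y)\bigr)$. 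Since $(x,y)\in\mathcal{C}_{r}$ forces $\mu_{n}(x,y)\in\mathcal{C}_{r'}$ for the same $r'$, this equals $\phi(a,b)\bigl(\mu_{n}(x,y)\bigr)=\Theta\bigl(\phi(a,b),\theta\bigr)(x,y)$, the right side of~\eqref{eqn:re_thm}. Note that the same manipulation shows $\phi_{r}(\Theta(a,\theta),b)=\Theta(\phi_{r'}(a,b),\theta)$ as an identity of functions.

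\textbf{Boundaries and origin.} On $\mathcal{B}_{r}$ both sides of~\eqref{eqn:re_thm} are the $\max$ of two copies of the cone computation, indexed by the cones $\mathcal{C}_{r}$ and $\mathcal{C}_{r+1}$ flanking $\mathcal{B}_{r}$; since $\mu_{n}$ sends $\mathcal{B}_{r}$ to some $\mathcal{B}_{r'}$ and those two cones to the two cones flanking $\mathcal{B}_{r'}$, the identity $\phi_{s}(\Theta(a,\theta),b)=\Theta(\phi_{s'}(a,b),\theta)$ just noted applies termwise, and $\max$ commutes with the pointwise relabelling $(x,y)\mapsto\mu_{n}(x,y)$, so the two $\max$'s agree. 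At the origin $\mu_{n}$ acts trivially and $\Theta(a,\theta)$ agrees with $a$ there, while the $\max$ over $r\in\{0,\dots,R-1\}$ is over an index set that the change of rotation indices merely permutes; the same termwise argument closes this case.

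\textbf{Main obstacle.} The convolution manipulations and the change of summation variable are routine. The delicate point is the index bookkeeping: one must verify precisely how $\mu_{n}$ permutes $\{\mathcal{C}_{r}\}$ and $\{\mathcal{B}_{r}\}$ and confirm that the induced shift of the filter-rotation index $\theta_{r}\mapsto\theta_{r'}$ coincides, modulo $2\pi$, with the shift $\theta_{r}\mapsto\theta_{r}-\theta$ produced by the group law—only then do the two rotated filters appearing on the two sides genuinely match. One must also justify the group law for $\beta=\theta$ when $R>1$, where $\theta_{r}$ is not a multiple of $\tfrac{\pi}{2}$ and $\Theta(\,\cdot\,,\theta_{r})$ is a genuine interpolation; the required commutativity of a $\tfrac{\pi}{2}$-multiple rotation with an arbitrary rotation rests on the invariance of the interpolation stencil under $\mu_{n}$, which is a point separate from the Lemma.
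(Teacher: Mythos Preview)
Your proposal is correct and follows essentially the same route as the paper: first establish the identity $\phi_{r}(\Theta(a,\theta),b)(x,y)=\phi_{r-nR}(a,b)(\mu_{n}(x,y))$ via the group law for $\Theta$, linearity of $\mu_{n}$, and a change of summation variable, then handle the three cases (cone interior, boundary, origin) of the piecewise definition~\eqref{eqn:cases} separately. The paper's own proof carries out the same computation a bit more tersely---it does not explicitly invoke the Lemma or introduce your auxiliary function $g$, and it uses the index set $\{0,\dots,4R-1\}$ at the origin rather than the $\{0,\dots,R-1\}$ appearing in~\eqref{eqn:cases}---so your attention to the index bookkeeping and to the commutativity of $\Theta(\cdot,\theta)$ with interpolated rotations is, if anything, more careful than what the paper provides.
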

\begin{proof}
   First, we establish the following useful relationship:
   \begin{align*}
       \phi_{r}(\Theta(a,\theta), b)(x,y) &= 
       \sum_{(x^{\prime}, y^{\prime})\in\mathbb{Z}^{2}}\Theta(b, \theta_{r})(x^{\prime}, y^{\prime})\Theta\left(a, \theta\right)(x - x^{\prime}, y - y^{\prime})\\
       &= \sum_{(x^{\prime}, y^{\prime})\in\mathbb{Z}^{2}}\Theta\left(\Theta(b, \theta), \theta_{r-nR}\right)(x^{\prime}, y^{\prime})\Theta\left(a, \theta\right)(x - x^{\prime}, y - y^{\prime})\\
       &= \sum_{(x^{\prime}, y^{\prime})\in\mathbb{Z}^{2}} \Theta\left(b, \theta_{r - nR}\right)(\mu_{n} (x^{\prime}, y^{\prime}))
       a(\mu_{n}(x - x^{\prime}, y -y^{\prime}))\\
       &=  \sum_{(x^{\prime}, y^{\prime})\in\mathbb{Z}^{2}}\Theta\left(b, \theta_{r - nR}\right)(\mu_{n} (x^{\prime}, y^{\prime}))
       a(\mu_{n}(x, y) - \mu_{n}(x^{\prime}, y^{\prime}))\\
       &= \sum_{(u,v)\in\mathbb{Z}^{2}} \Theta\left(b, \theta_{r - nR}\right)(u, v)
       a(\mu_{n}(x, y) -(u, v)))\\
       &= \phi_{r-nR}(a,b)(\mu_{n}(x,y))
   \end{align*}
   Now, we prove Eqn.~\ref{eqn:re_thm} for cases of $(x,y)\in\mathbb{Z}^{2}$:
   \begin{case}
       $(x,y)\in\mathcal{C}_{r}$ for some $r\in\left\{0, 1, \ldots, 4R-1\right\}$
   \end{case}
   \begin{align*}
       \phi(\Theta(a,\theta), b)(x,y) &= \phi_{r}(\Theta(a,\theta), b)(x,y)\\
       &= \phi_{r-nR}(a,b)(\mu_{n}(x,y))\\
       &= \phi(a,b, R)(\mu_{n}(x,y))\\
       &= \Theta\left(\phi(a,b), \theta\right)(x,y)
   \end{align*}
   \begin{case}
       $(x,y)\in\mathcal{B}_{r}$ for some $r\in\left\{0, 1, \ldots, 4R-1\right\}$
   \begin{align*}
       \phi(\Theta(a,\theta), b)(x,y) &= \max\left\{\phi_{r}(\Theta(a,\theta), b)(x,y), \phi_{r+1}(\Theta(a,\theta), b)(x,y)\right\}\\
       &= \max\left\{\phi_{r-nR}(a,b)(\mu_{n}(x,y)), \phi_{r-nR + 1}(a,b)(\mu_{n}(x,y))\right\}\\
       &= \phi(a,b)(\mu_{n}(x,y))\\
       &= \Theta\left(\phi(a,b), \theta\right)(x,y)
   \end{align*}
   \end{case}
   \begin{case}
       $(x,y)=(0,0)$
   \begin{align*}
       \phi(\Theta(a,\theta), b)(x,y) &= \max_{r\in\left\{0,1,\ldots,4R-1\right\}}\phi_{r}(\Theta(a,\theta), b)(x,y)\\
       &= \max_{r\in\left\{0,1,\ldots,4R-1\right\}}\phi_{r - nR}(a, b)(\mu_{n}(x,y))\\
       &= \phi(a, b)(\mu_{n}(x,y))\\
       &= \Theta\left(\phi(a,b), \theta\right)(x,y)
   \end{align*}
   \end{case}
\end{proof}
%

\subsection{Rotation-Invariant Transition using the Magnitude of the 2D-DFT}

The standard practice in designing convolutional networks is to use convolutional layers for the first several layers and then transition to fully connected layers for the remaining.
In a `fully-convolutional' network, convolution and downsampling are applied until the spatial dimensions are eliminated and the resulting feature map of the final convolutional layer is merely a vector, with dimension equal to the number of filters.
Otherwise, the feature map of the final convolutional layer must be reshaped to a vector, resulting in the confusion of the spatial relationship of nodes.
Thus, it is intuitive to encode rotation invariance at the transition between convolutional to fully-connected layers, before this spatial information is lost.
In G-CNN, invariance is encoded at this location, by pooling across groups of the feature map of the final convolutional layer of a fully-convolutional network.

Our proposed method for achieving invariance is also located at this transition, but rather than pooling the filter responses, the filter responses are transformed to a space in which rotation becomes circular shift so that the 2D-DFT can be applied to encode invariance.
The primary merit of the 2D-DFT as an invariant transform is that each output node is a function of every input node, and not just the nodes of a particular filter response, thereby capturing mutual information across responses.
Additionally, the 2D-DFT allows for backpropagation for optimization of the network and has highly efficient implementations, especially for inputs of powers of two.

To describe this transition, we adopt the representation of feature maps as tensors, rather than functions, for ease of notation, especially considering our use of the DFT, which operates on finite-length signals.
We denote the feature map generated by the penultimate convolutional layer by $a\in \mathbb{R}^{M\times M\times d}$, where $M\in\mathbb{Z}_{>1}$.
In a fully-convolutional network, the final convolutional layer is in reality just a fully-connected layer, in which the output $a$ is passed through $K\in\mathbb{Z}_{>0}$ fully-connected filters, denoted by weight tensors $\Omega^{(k)}\in \mathbb{R}^{M\times M\times d}$, $k \in\left\{0,1, \ldots, K-1\right\}$.
However, in our formulation, rotations $\theta_{r}$, $r\in\left\{0,1,\ldots,4R-1\right\}$, $R\in \mathbb{Z}_{>0}$, of the weight tensors are also applied, as diagrammed in Fig.~\ref{fig:ricnn_diagram}.
This can in fact be considered as an instance of our proposed rotation-equivariant convolution, where the only convolution occurs at the origin, but the output is not pooled.

As in the previous subsection, we define a function $\Theta$ to rotate feature maps, though instead operating on tensors, $\Theta\colon \mathbb{R}^{M\times M}\times\left[0, 2\pi\right] \to \mathbb{R}^{M\times M}$.
In particular, for angles $\theta = \frac{n\pi}{2}$ for some $n\in\mathbb{Z}$, 
\begin{equation}
    \Theta(a, \theta)_{x,y} = a_{\mu_{n}(x,y)},
\end{equation}
as before. The output of this transition of convolutional to fully-connected layers, denoted by $z\in \mathbb{R}^{K\times 4R}$, is given by,
\begin{equation}
    z_{k, r} = \langle\Theta(\omega^{(k)}, \theta_{r}), a\rangle,
\end{equation}
where $\langle\cdot, \cdot\rangle$ denotes the inner product.
Note that, for $\theta = \frac{n\pi}{2}$,  for some $n\in\mathbb{Z}$,
\begin{align*}
    \langle\Theta(\omega_{k}, \theta_{r}), \Theta(a, \theta)\rangle
    &= \langle\Theta(\Theta(\omega_{k}, \theta_{r - nR}), \theta), \Theta(a, \theta)\rangle\\
    &= \langle\Theta(\omega_{k}, \theta_{r - nR}), a\rangle\\
    &= z_{k, (r-nR)\operatorname{mod}4R}.
\end{align*}
Thus, rotations of the final convolutional layer $a$ will correspond to circular shifts in $z$ along the second dimension, as desired.
The magnitude response of the 2D-DFT transforms these circular shifts to an invariant space $z^{\prime}\in \mathbb{R}^{K\times 4R}$,
\begin{equation}
    z^{\prime}_{k, i} = \left| \mathcal{DFT}\left\{ z\right\}\right|(k, i) = \left|\sum_{h=0}^{K-1}\sum_{r=0}^{4R-1}z_{h, r}e^{-j2\pi\left(\frac{hk}{K} + \frac{r i}{4R}\right)}\right|.
    \label{eqn:dft}
\end{equation}
This process of encoding rotation invariance corresponds to the `Convolutional-to-Full Transition' in Fig.~\ref{fig:ricnn_diagram}.
The result is then vectorized $z^{\prime}\in\mathbb{R}^{4KR}$ and passed into fully-connected layers that precede the final output layer, as in a standard CNN,
\begin{equation}
    a^{\prime} = f(Wz^{\prime} + c),
\end{equation}
where $W\in \mathbb{R}^{K^{\prime}\times 4KR}$ is the weight matrix of the first fully-connected layer, $c\in\mathbb{R}^{K^{\prime}}$ is the offset, and $K^{\prime}$ is the number of nodes in the first fully-connected layer.

Although the 2D-DFT was the basis for the design of this transition layer and the proposed rotation-equivariant convolutional layer, it can also be integrated into other rotation-equivariant networks, such as G-CNN.
Rotation equivariance in G-CNN is encoded along contiguous stacks of feature maps $a\in \mathbb{R}^{M\times M\times 4d}$ of each filter at four rotations.
To integrate the 2D-DFT, first, the output feature maps are reshaped into a two-dimensional tensor such that the second dimension corresponds to rotations, $z\in \mathbb{R}^{M^{2}d\times 4}$.
Note that in the case of a fully-convolutional network, $M=1$, the first dimension corresponds to the filter and the second dimension to its rotation.
In this way, rotations similarly correspond to circular shifts in the new space.
This representation $z$ is then passed through the 2D-DFT, as in Eqn.~\ref{eqn:dft}, excepting that $K = M^{2}d$, and then subsequently through the first fully-connected layer.

\section{Results}

\subsection{Application to Rotated MNIST}

\begin{figure}
\begin{floatrow}
    \capbtabbox{%
        \resizebox{0.5\textwidth}{!}{\begin{tabular}{lc}
        \toprule
        Algorithm & Test Error (\%)\\
        \midrule
        Schmidt and Roth & 3.98\\
        \midrule
        \citet{Cohen2016} (CNN)& 5.03\\
        \midrule
        \citet{Cohen2016} (G-CNN)& 2.28\\
        \midrule
        RiCNN & 2.33\\
        \midrule
        G-CNN + DFT & 2.00\\
        \bottomrule
        \end{tabular}
    }}{%
        \caption{Comparison of test error on the rotated MNIST data set.}
        \label{tab:rotated_mnist}
    }
    \ffigbox{%
        \centering
        \includegraphics[width=0.14\textwidth]{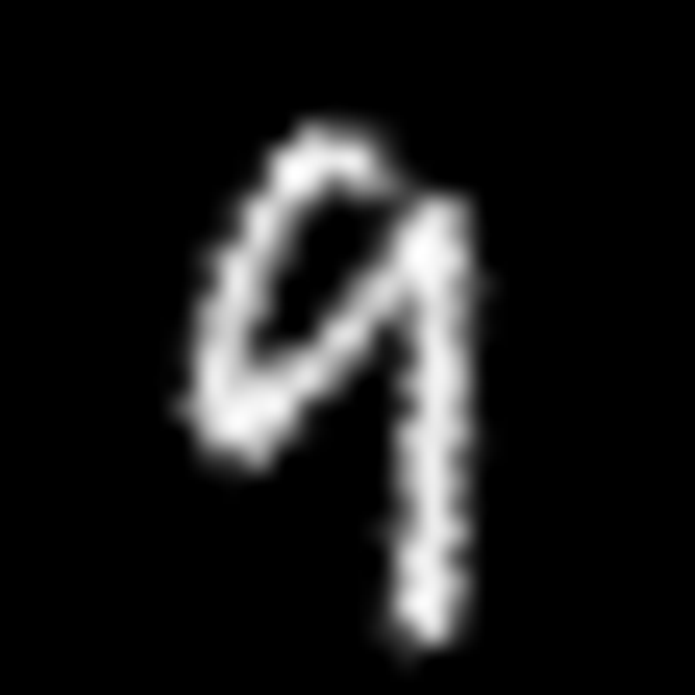}
        \includegraphics[width=0.14\textwidth]{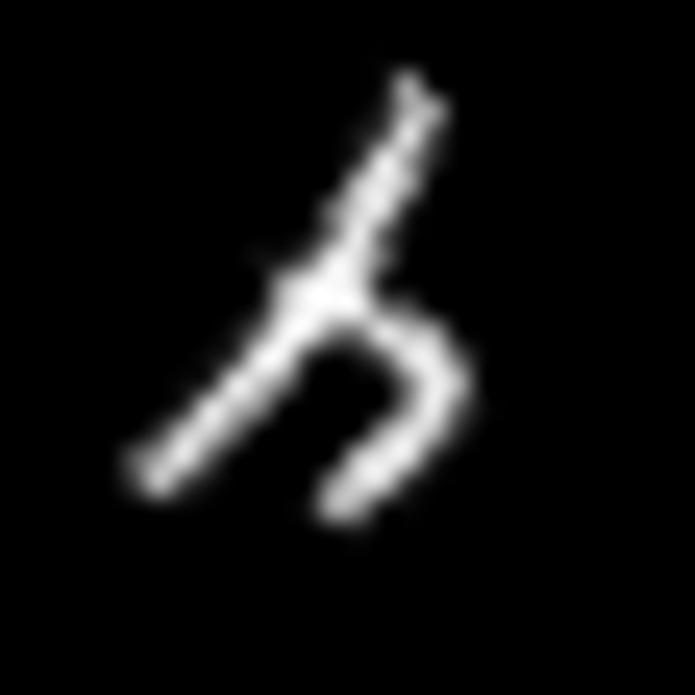}
        \includegraphics[width=0.14\textwidth]{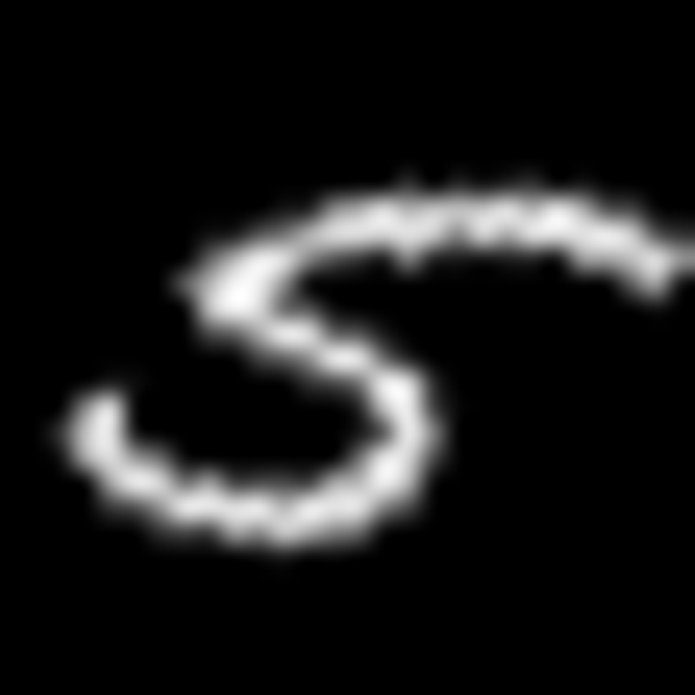}
    }{%
        \caption{Examples from the rotated MNIST data set.}
        \label{fig:mnist}
    }
\end{floatrow}
\end{figure}

The rotated MNIST data set~\citep{Larochelle2007}, examples of which are shown in Fig.~\ref{fig:mnist}, has been used as a benchmark for several previous works on rotation invariance.
We compared our overall method of RiCNN, as well as G-CNN with the integration of the 2D-DFT, against results reported in~\citep{Cohen2016}.
In our analysis, the only other change we made from the reported architecture for G-CNN was to reduce the number of filters for each layer to 7, to offset the addition of the 2D-DFT.
Our RiCNN architecture used the same number of convolutional layers and filter size, except that 20 filters were used at the transition layer and we added a fully-connected layer of 10 nodes between the DFT and the output layer.
As in previous works, to tune the parameters of each method, we first trained various models on a set of 10,000 images and then selected the best model based on the accuracy on a separate validation set of 5,000 images.
The results of these two approaches, along with the reported results of the other state-of-the-art, on a held-out set of 50,000 test images are shown in Table~\ref{tab:rotated_mnist}.
RiCNN outperforms the standard CNN and performs comparably to G-CNN, while requiring less computation and storage.
Notably, replacing the pooling operations in G-CNN with the 2D-DFT provides a meaningful improvement for this particular problem, demonstrating the value of incorporating mutual rotational information between filters when encoding invariance.
We also note that, depending upon the style of writing, the digits six and nine are often indistinguishable under rotation and that it is likely that perfect accuracy is not attainable.

\subsection{Application to Synthetic Biomarker Images}

In order to explicitly control the manifestation of rotational invariance, as well as the types and degree of inter-class and intra-class variation, in a classification task, we created a set of synthetic images, in this case to emulate real-world microscopy images of biological signals.
We sought a generative model for which classes could not be discriminated by trivial, rotation-invariant features and did not express local rotational invariance, which might be easier to learn by a CNN without the need for global invariance.
To achieve this, we used Gaussian-mixture models (GMMs) to generate spatial patterns for each class, which represent unique spatial distributions of synthetic biomarkers within a cell.
This approach has been used similarly in other work to generate synthetic models of cells and their subcellular protein objects~\citep{Zhao2007}.

Mathematically, each class $k\in\left\{1,\ldots,K\right\}$ is described by the parameters $\Lambda$ of the GMM: $\Lambda_{k} = \left\{\vec{\mu}_{g}, \Sigma_{g}\right\}^{G}_{g=1}$,
where $\vec{\mu}_{g}\in \left[-1,1\right]^{2}$ and the number of Gaussians per class is a parameter $G$ of the data set.
For simplicity, we consider the image $I\colon \mathbb{R}^{2}\to \mathbb{R}$ to be nonzero only over a region slightly larger than the $\left[-1,1\right]^{2}$ box, so that it captures the majority of points generated by the Gaussians.

To generate a sample image from the generating distribution, first, a constant background intensity is set for the image according to $b\sim \operatorname{Exp}(0, \lambda_{B})$, so $I(p)=b$, $\forall p\in\mathbb{R}^{2}$.
Then a random angle $\theta\sim\operatorname{Uniform}\left[0,2\pi\right]$ is drawn to determine the rotation of the image.
The mean $\vec{\eta}_{g}\sim\mathcal{N}\left(\vec{\mu}_{g}, \Sigma\right)$ for each Gaussian of the class is drawn from an underlying Gaussian with mean $\vec{\mu}_{g}$, which introduces some small jitter of the relative locations of the Gaussians.
A number $n_{g}\sim \mathcal{N}(\mu_{n,g}, \sigma_{n})$ of points $\left\{p\right\}$ in $\left[-1,1\right]^{2}$, which vary for each Gaussian, are drawn from this Gaussian according to $p\sim\mathcal{N}\left(R\vec{\eta}_{g}, R\Sigma_{g}R^{-1}\right)$,
where the realized mean and covariance have been rotated by $\theta$ by the rotation matrix:
\begin{equation}
    R = \begin{bmatrix}
        \cos(\theta) & -\sin(\theta)\\
        \sin(\theta) & \cos(\theta)
    \end{bmatrix}.
\end{equation}
For each point $p$, its corresponding intensity value is drawn according to $I(p)\sim \operatorname{Uniform}\left[\mu_{I}-m_{I}, \mu_{I}+m_{I}\right]$, replacing the background value.
Having drawn all of the points, the image is smoothed with a Gaussian kernel with variance $\sigma_{s}$ to emulate the point-spread function of the imager and pixel noise is added: $I(p) = I(p) + \operatorname{Exp}(0, \lambda_{I})$.
To simulate camera jitter, the image is translated by a random offset of up to three pixels.

\begin{figure}[t]
  \centering
  \begin{minipage}{0.25\textwidth}
  \begin{minipage}{\textwidth}
      \centering
      \includegraphics[width=0.3\textwidth]{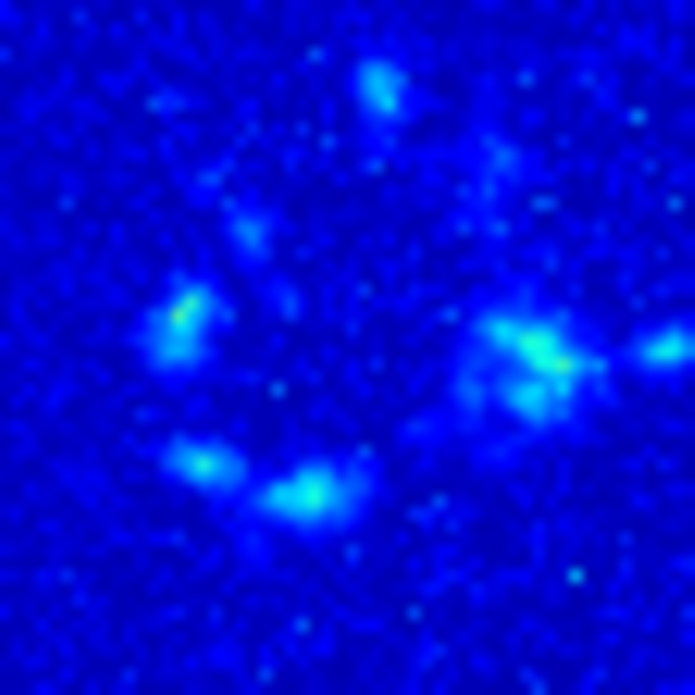}
      \includegraphics[width=0.3\textwidth]{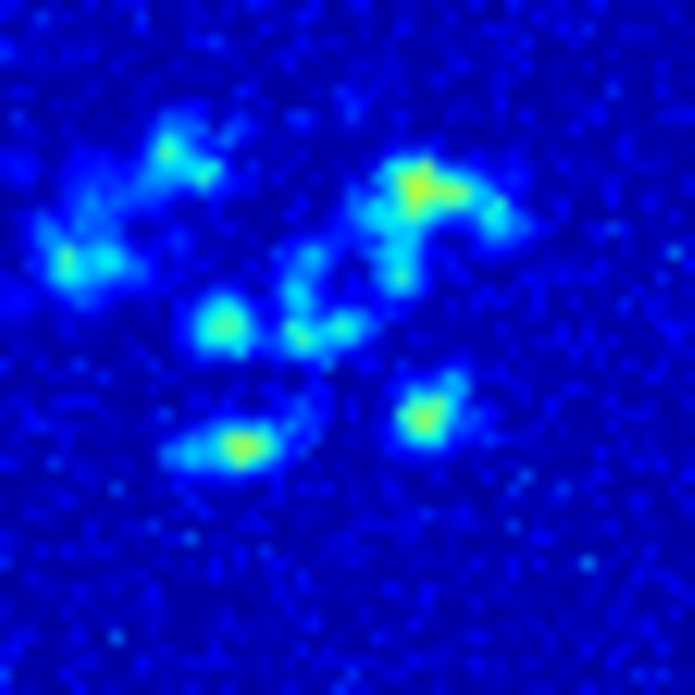}
      \includegraphics[width=0.3\textwidth]{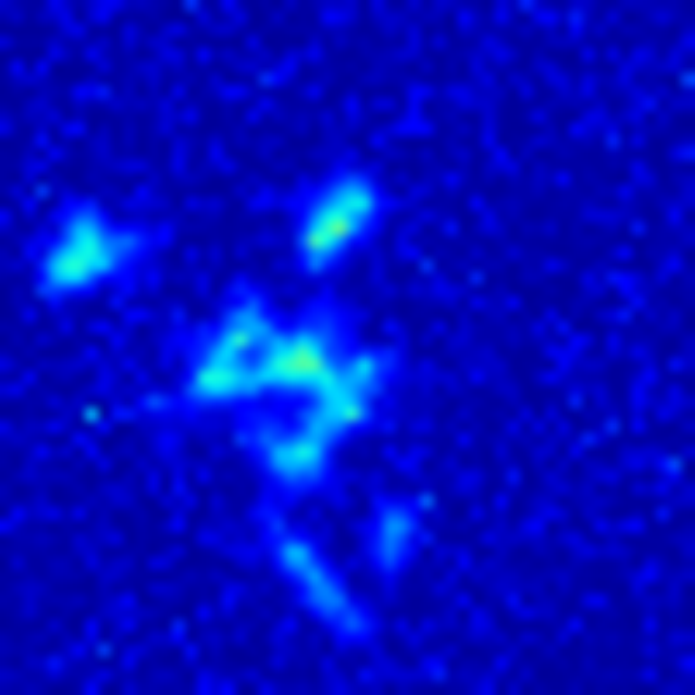}\\
      \includegraphics[width=0.3\textwidth]{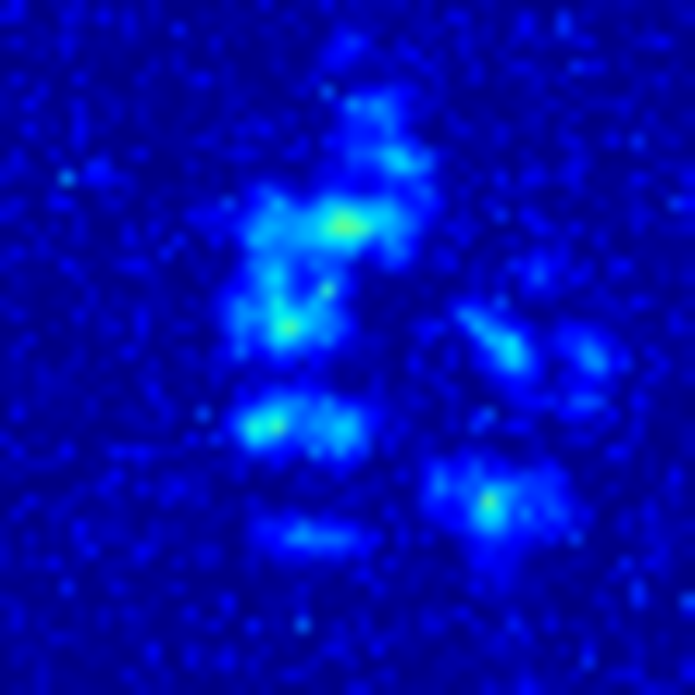}
      \includegraphics[width=0.3\textwidth]{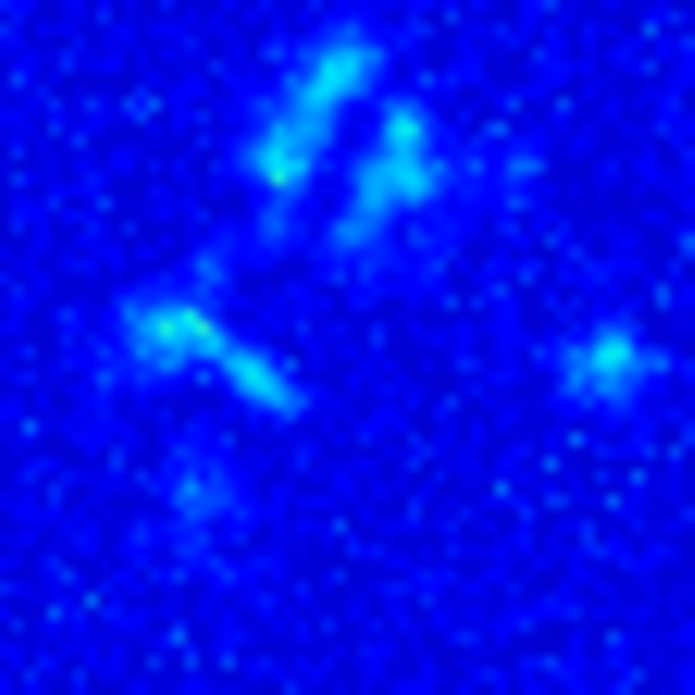}
      \includegraphics[width=0.3\textwidth]{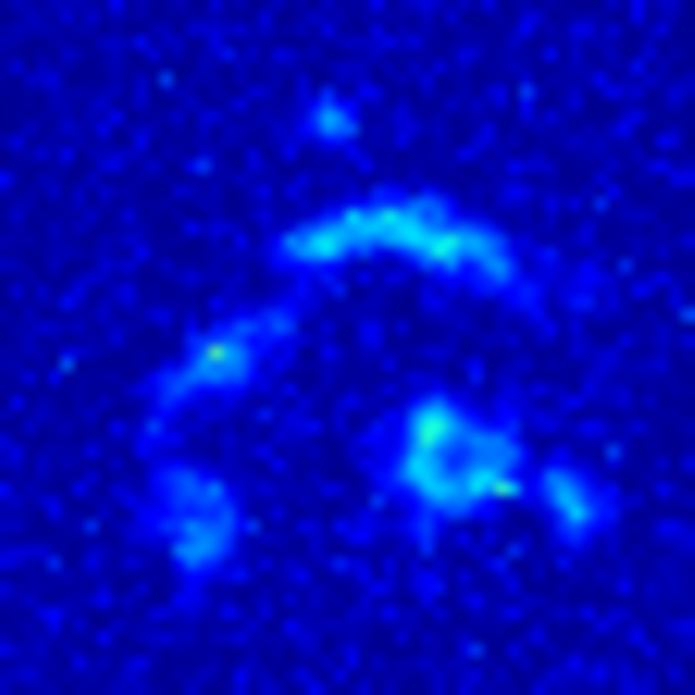}
      \subcaption{Example images of six of the 50 classes, showing the inter-class variation.}
      \label{fig:synthetic_examples}
  \end{minipage}
  \begin{minipage}{\textwidth}
      \centering
      \includegraphics[width=0.3\textwidth]{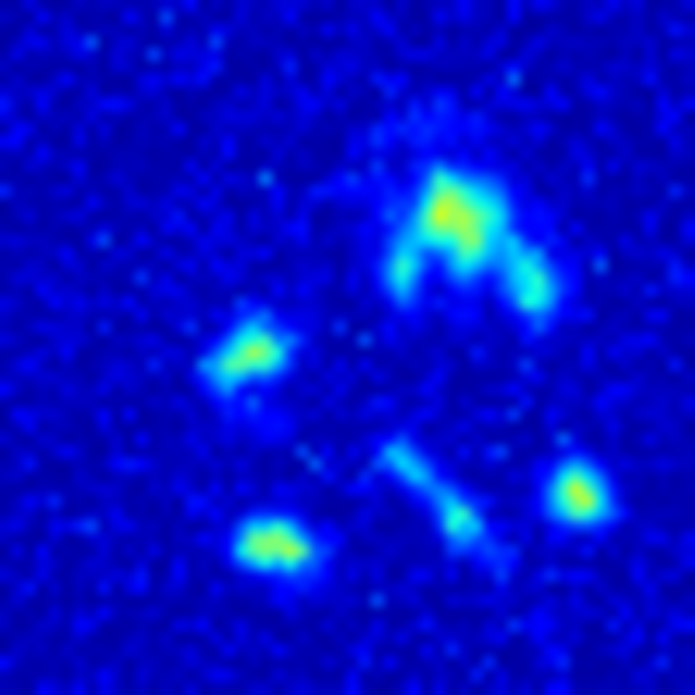}
      \includegraphics[width=0.3\textwidth]{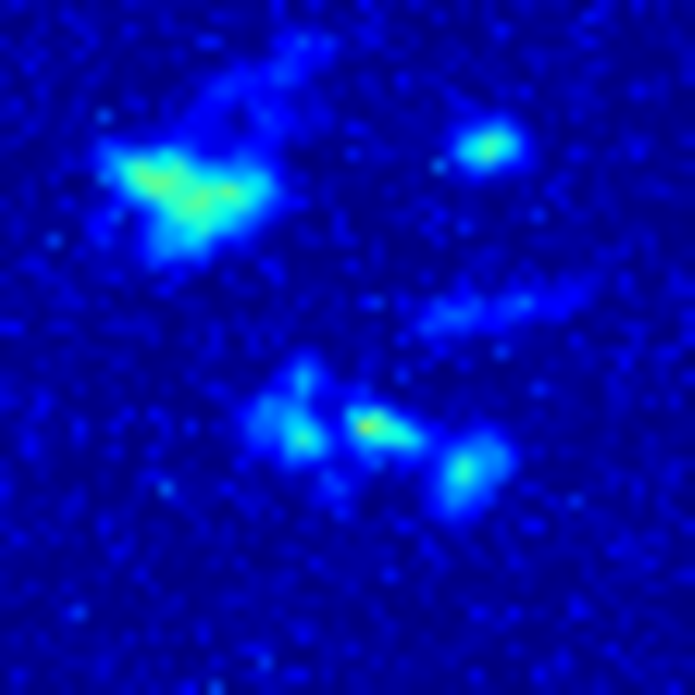}
      \includegraphics[width=0.3\textwidth]{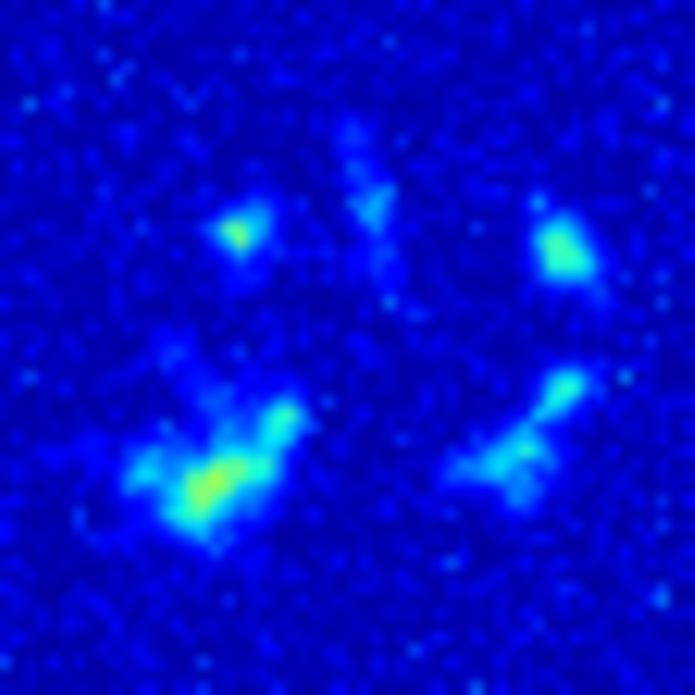}\\
      \includegraphics[width=0.3\textwidth]{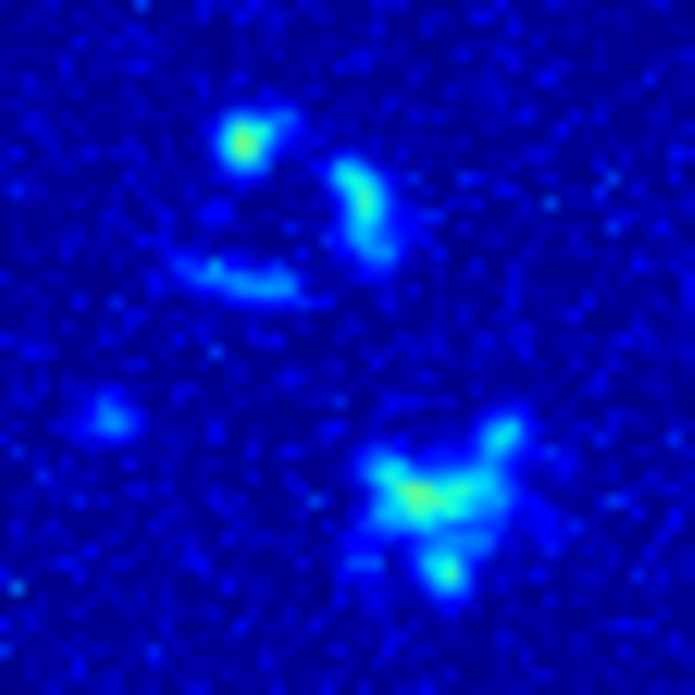}
      \includegraphics[width=0.3\textwidth]{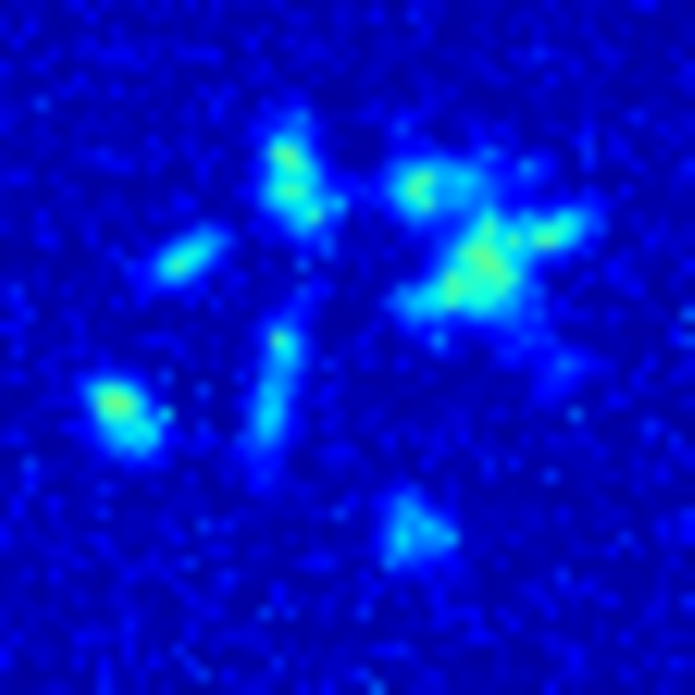}
      \includegraphics[width=0.3\textwidth]{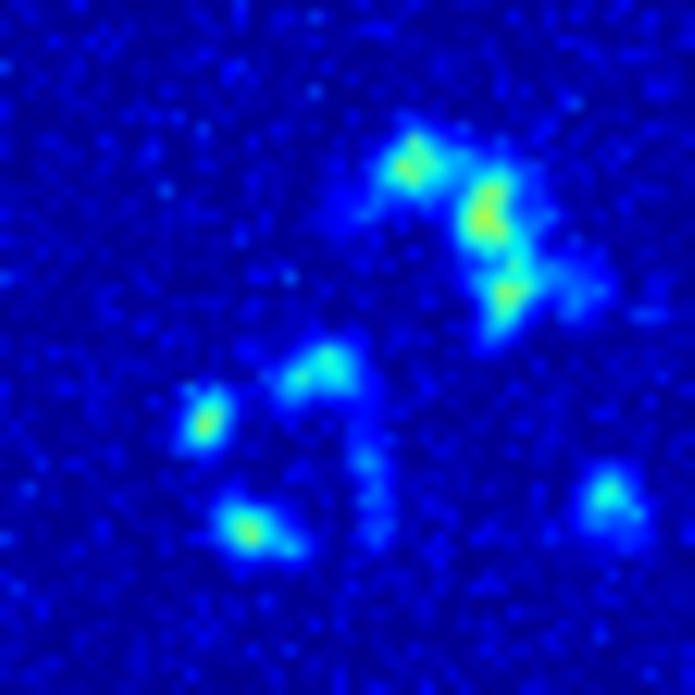}
      \subcaption{Rotated examples from a single class of rotations from 0 to $2\pi$ (left to right, top to bottom), showing the intra-class variation.}
      \label{fig:synthetic_rotation_examples}
  \end{minipage}
  \end{minipage}
  \begin{minipage}{0.36\textwidth}
  \begin{minipage}{\textwidth}
      \centering
      \includegraphics[width=\textwidth]{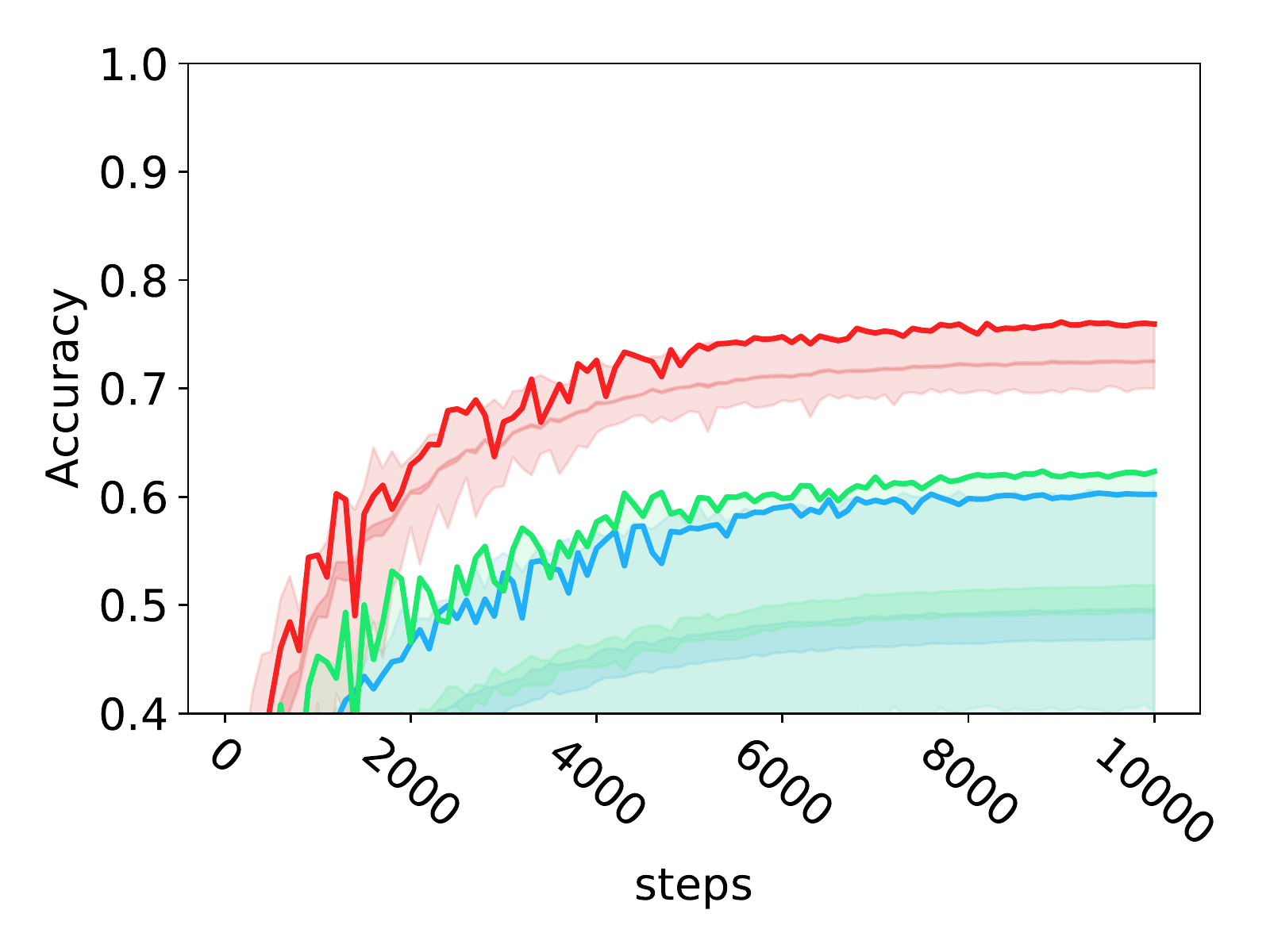}
      \subcaption{$N=10$.}
      \label{fig:synthetic_avg_acc_10}
  \end{minipage}
  \begin{minipage}{\textwidth}
      \centering
      \includegraphics[width=\textwidth]{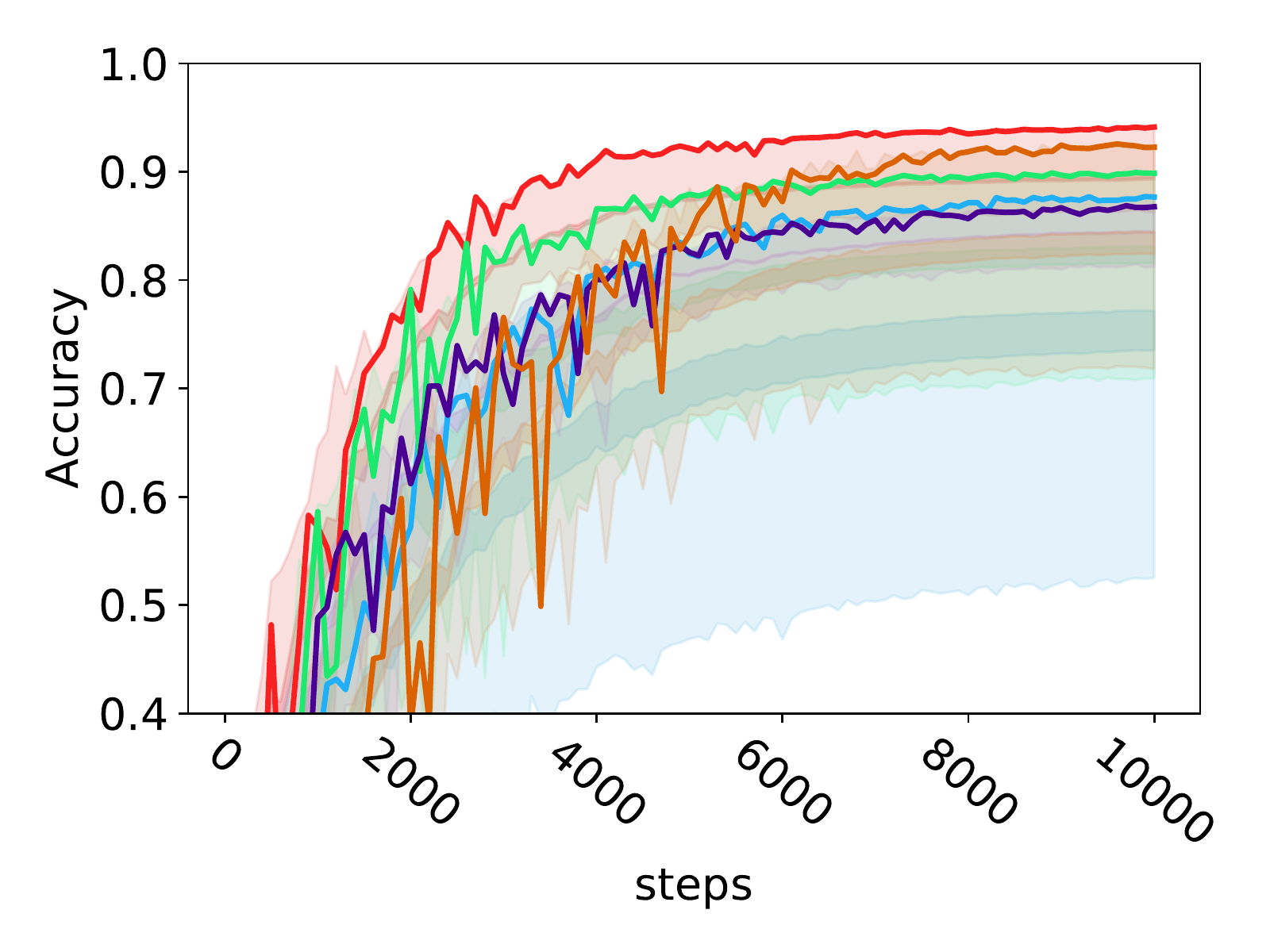}
      \subcaption{$N=50$.}
      \label{fig:synthetic_avg_acc_50}
  \end{minipage}
  \end{minipage}
  \begin{minipage}{0.36\textwidth}
  \begin{minipage}{\textwidth}
      \centering
      \includegraphics[width=\textwidth]{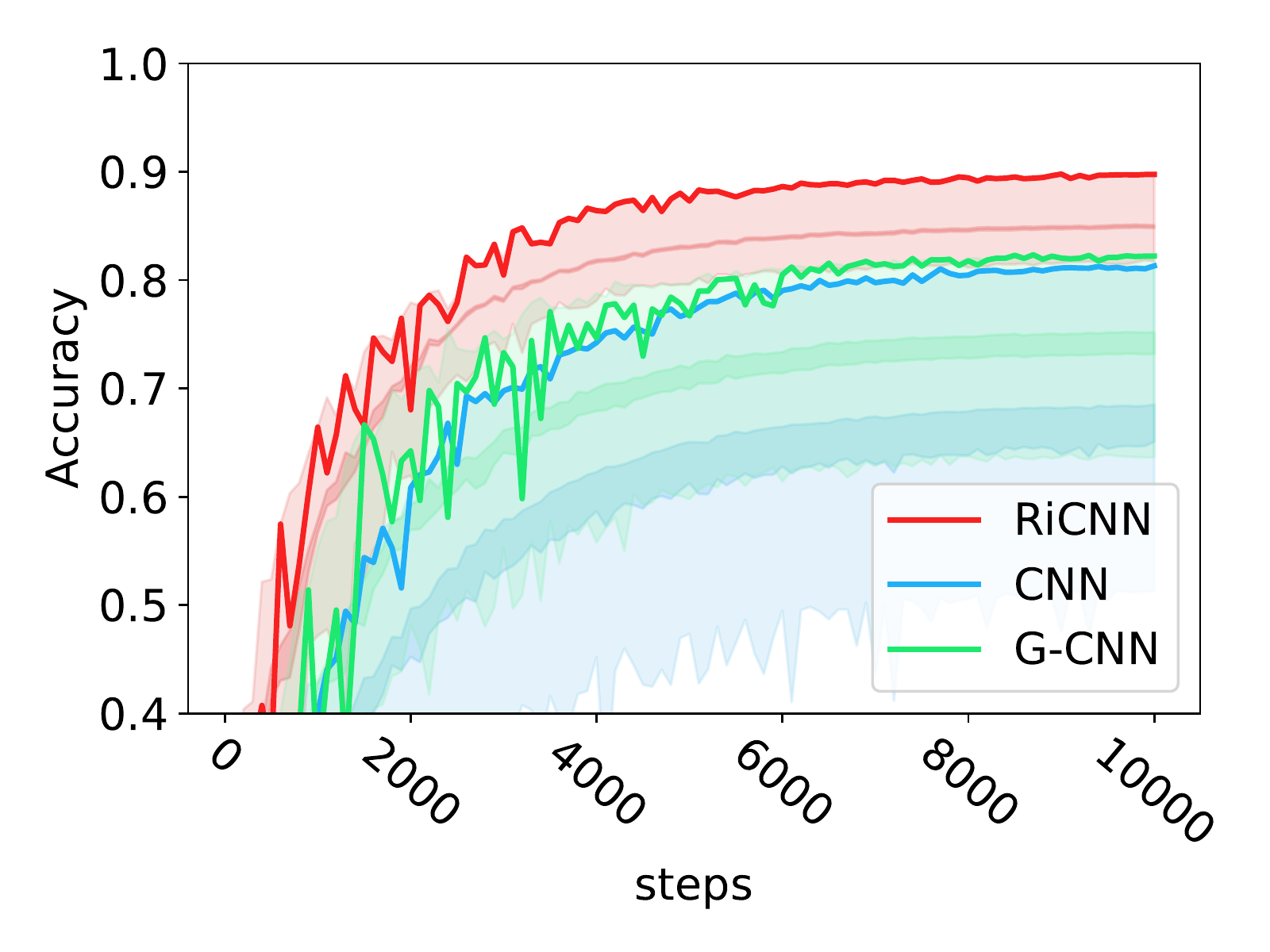}
      \subcaption{$N=25$.}
      \label{fig:synthetic_avg_acc_25}
  \end{minipage}
  \begin{minipage}{\textwidth}
      \centering
      \includegraphics[width=\textwidth]{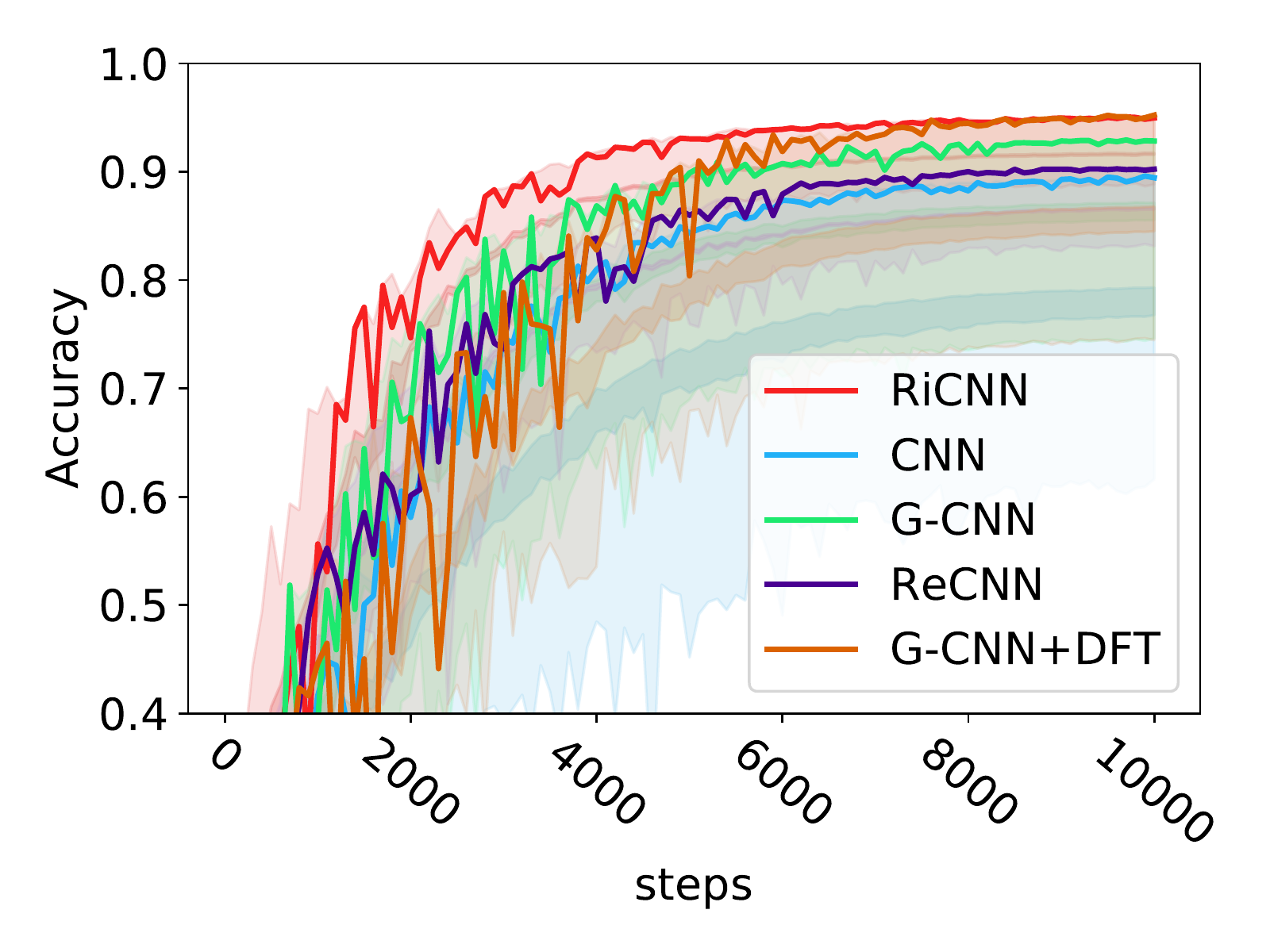}
      \subcaption{$N=100$.}
      \label{fig:synthetic_avg_acc_100}
  \end{minipage}
  \end{minipage}
  \caption{Comparison of RiCNN, ReCNN, G-CNN, G-CNN+DFT, and a standard CNN on the GMM synthetic biomarker images. (a),(b) Example images, shown as heat maps for detail, showing inter- and intra-class variation. (c)-(f) Testing classification accuracy of RiCNN, GCNN, and a standard CNN over training steps on synthetic GMM images, with varying numbers of training examples per class, denoted by $N$.}
  \label{fig:synthetic}
\end{figure}

Example synthetic images from across and within classes are shown in Fig.~\ref{fig:synthetic_examples} and Fig.~\ref{fig:synthetic_rotation_examples}, respectively.
For our experiment, we defined 50 distribution patterns and generated 10, 25, 50, and 100 examples per class for training and 200 examples per class for testing.
Each class was defined by a mixture of ten Gaussians. 
The image size was 50 pixels.
A batch size of 50 examples, a learning rate of $5\times 10^{-3}$, and a weight decay $\ell_{2}$ penalty of $5\times 10^{-4}$ were used during training.
To help all methods, we augmented the training data by rotations of $\frac{\pi}{2}$ and random jitter of up to three pixels, as was done during image generation.

Classification accuracies on the test set over training steps for various numbers of training samples, denoted by $N$, for RiCNN, G-CNN, and a standard CNN are shown in Figs.~\ref{fig:synthetic_avg_acc_10}-\ref{fig:synthetic_avg_acc_100}.
In addition, for the sets of 50 and 100 training examples per class, we compared the performance of our rotation-equivariant convolutional neural network (ReCNN) without the 2D-DFT transition, as well as G-CNN with the 2D-DFT.
A variety of configurations were trained for each network, and each configuration was trained three times.
The darkest line shows the accuracy of the configuration that achieved the highest moving average, with a window size of 100 steps, for each method.
The spread of each method, which is the area between the point-wise maximum and minimum of the error, is shaded with a light color, and three standard-deviations around the mean is shaded darker. 

For training sets of 10, 25, 50, and 100 images per class, we observe a consistent trend of RiCNN outperforming G-CNN, which in turn marginally outperforms the CNN, both in overall accuracy and in terms of the number of steps required to attain that accuracy.
Additionally, the spread of RiCNN is mostly above even the best performing models of G-CNN and the CNN.
This demonstrates that an instance of RiCNN will outperform other methods even if the best set of hyperparameters has not been chosen.

Without including the 2D-DFT, ReCNN performs comparably to a standard CNN, but it has the advantage of requiring significantly fewer parameters.
It is notable that, again, including the 2D-DFT increases the performance of G-CNN, to a comparable level with RiCNN in fact, though it does not train as quickly.

\subsection{Application to Protein Localization in Budding Yeast Cells}

\begin{figure}[t]
  \centering
  \begin{minipage}{0.25\textwidth}
      \centering
      \includegraphics[width=0.4\textwidth]{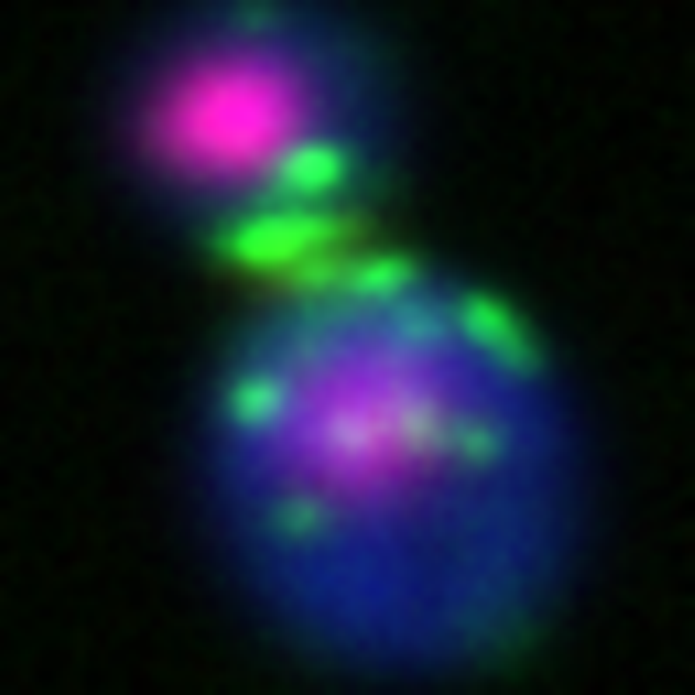}
      \includegraphics[width=0.4\textwidth]{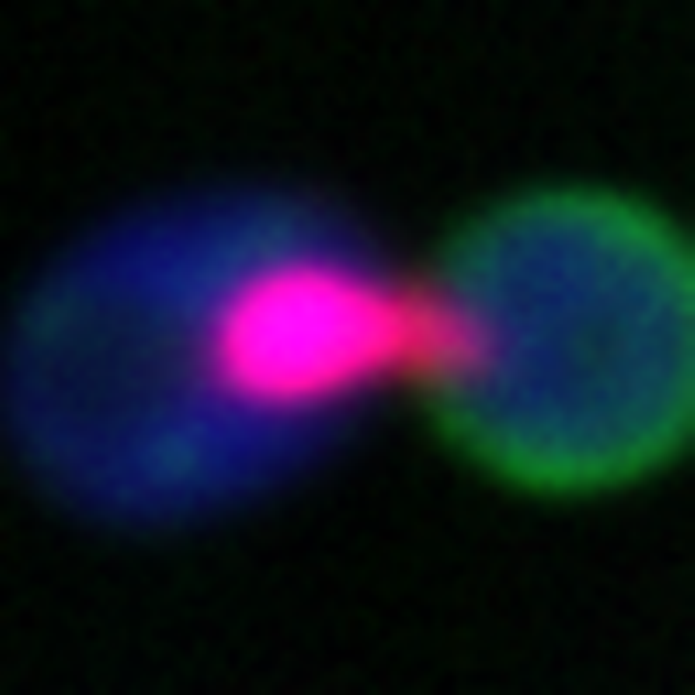}\\
      \includegraphics[width=0.4\textwidth]{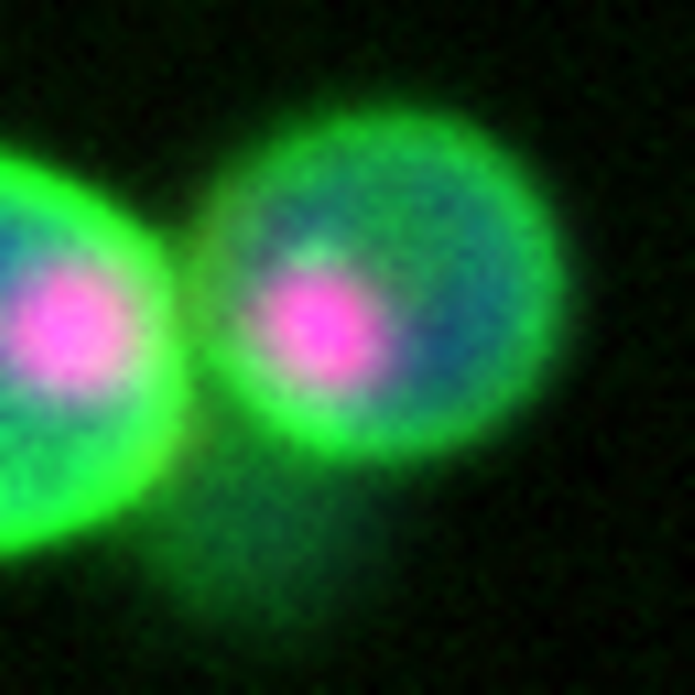}
      \includegraphics[width=0.4\textwidth]{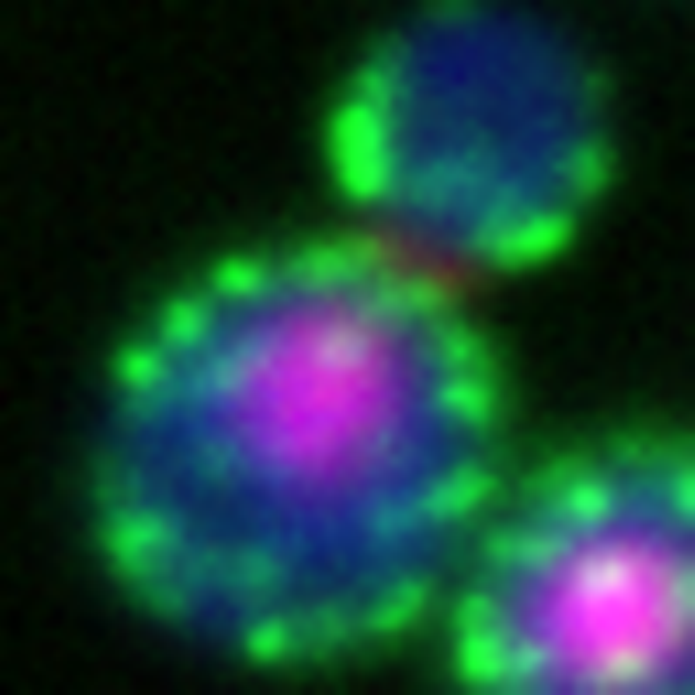}
      \subcaption{Example images of cells of four of the 22 yeast phenotypes from \citet{Kraus2017}.}
      \label{fig:yeast_phenotypes}
  \end{minipage}
  \begin{minipage}{0.36\textwidth}
  \begin{minipage}{\textwidth}
      \centering
      \includegraphics[width=\textwidth]{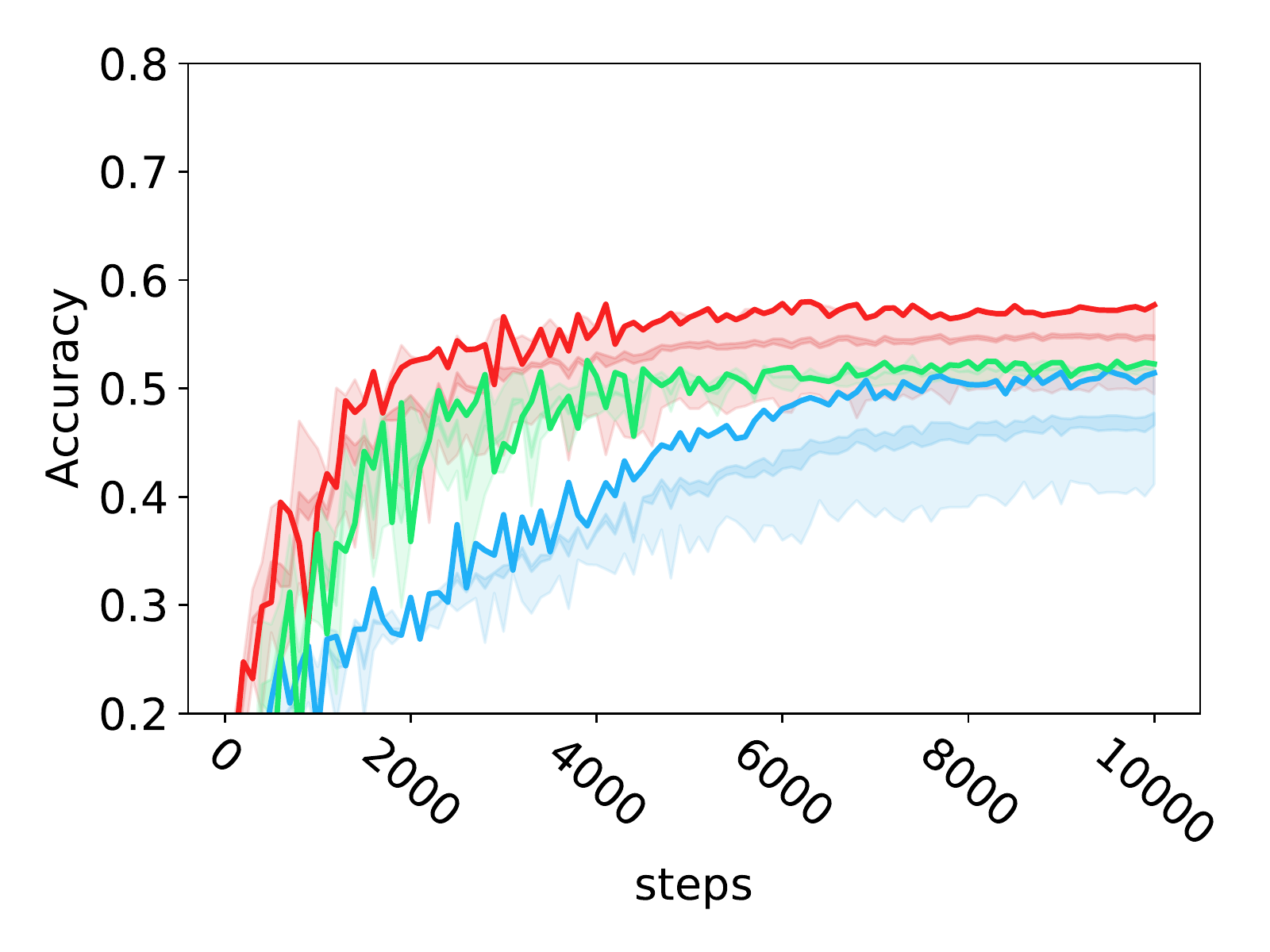}
      \subcaption{$N=50$}
      \label{fig:yeast_50}
  \end{minipage}
  \end{minipage}
  \begin{minipage}{0.36\textwidth}
  \begin{minipage}{\textwidth}
      \centering
      \includegraphics[width=\textwidth]{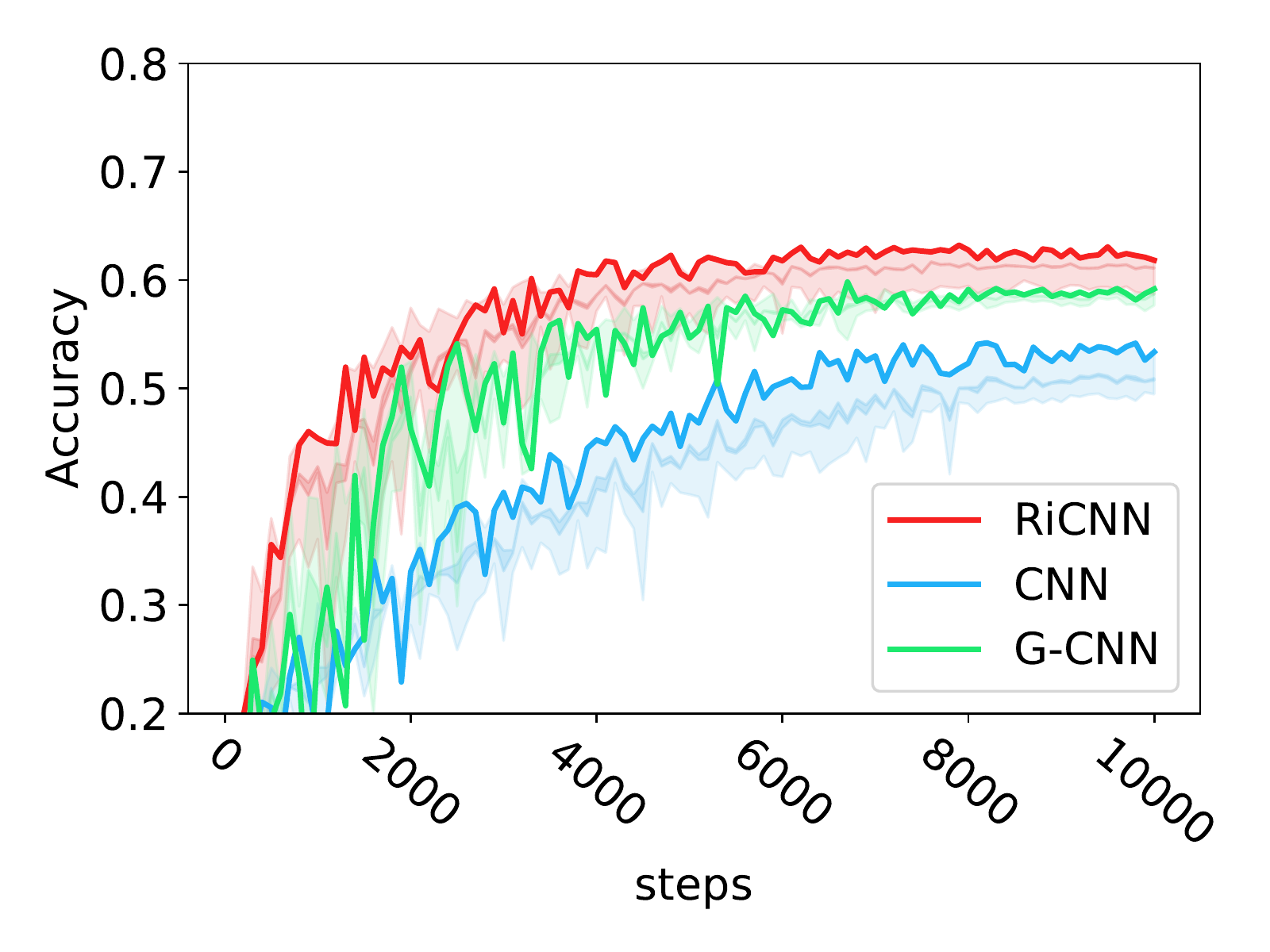}
      \subcaption{$N=100$}
      \label{fig:yeast_100}
  \end{minipage}
  \end{minipage}
  \caption{Comparison of RiCNN, G-CNN, and a standard CNN for classifying budding yeast cell phenotypes. (a) Example images from phenotype classes. (b)-(c) Testing accuracy of RiCNN, GCNN, and a standard CNN for classifying budding yeast phenotypes over training steps, with varying numbers of training examples per class, denoted by $N$.}
  \label{fig:yeast}
\end{figure}
%

Having demonstrated the merit of RiCNN on synthetic images, we further extended our analysis to real biomarker images of budding yeast cells~\citep{Kraus2017}, shown in Fig.~\ref{fig:yeast_phenotypes}.
Each image consists of four stains, where blue shows the cytoplasmic region, pink the nuclear region, red the bud neck, and green the protein of interest.
The classification for each image is the cellular subcompartmental region in which the protein is expressed, such as the cell periphery, mitochondria, or eisosomes.
This particular task is significantly more challenging than the classifying the synthetic images, since differences between phenotypes can be subtle.

Fig.~\ref{fig:yeast} shows the results of using RiCNN, G-CNN, and a standard CNN to classify the protein localization for each image.
We used the same architecture as reported in~\citep{Kraus2017} for all methods, except that we removed the last convolutional layer and reduced the number of filters per layer by roughly half for RiCNN and G-CNN, to offset for encoding of equivariance and invariance.
The same training parameters and data augmentation were used as for the synthetic data, except that a dropout probability of 0.8 was applied at the final layer and the maximum jitter was increased six pixels, since we could not control as well for proper centering. 
For each method, several iterations were run and the spread and the best performing model are shown.
Again, RiCNN outperforms G-CNN and a standard CNN, when the number of training examples per class is either 50 or 100 (see Fig.~\ref{fig:yeast}b-c),
demonstrating that the gains of the 2D-DFT and proposed convolutional layers translate to real-world microscopy data.
We note that the best reported algorithm that did not use deep learning, called ensLOC~\citep{Chong2015,Koh1223}, was only able to achieve an average precision of 0.49 for a less challenging set of yeast phenotypes and with $\sim$20,000 samples, whereas all runs of RiCNN achieved an average precision of between 0.60 - 0.67 with $\sim$10\% of the data used for training.

\section{Conclusion}

In this paper, we have demonstrated the effectiveness of enforcing rotation equivariance and invariance in CNNs by means of the proposed convolutional layer and the 2D-DFT.
In particular, we have demonstrated the utility of the 2D-DFT for encoding invariance across two different equivariant approaches to convolution.
We have also demonstrated the proposed rotation-equivariant convolutional layer can contribute to better accuracy in classification tasks while requiring minimal additional computation or storage.
We believe that the proposed enhancements to the standard CNN will have much utility for future applications in relevant problem settings, in particular, molecular and cellular imaging.
Especially as high-throughput imaging studies become more specific, our new method may facilitate new discoveries in detecting rare cellular events and accurate classification of complex phenotypes. 

\clearpage
\bibliographystyle{unsrtnat}
\bibliography{cellphenotyping_ricnn}

%
%
%
\end{document}